\title{Why Most Optimism Bandit Algorithms Have the Same Regret Analysis: A Simple Unifying Theorem\thanks{This research was funded by National Science Foundation  grants CCF-2112457 and  CCF-2312198, and Army Research office  grant W911NF-24-1-0083
}}
\author{Vikram Krishnamurthy, Cornell University, Ithaca NY 14853, USA  vikramk@cornell.edu}
\date{\today}
\newtheorem{theorem}{Theorem}
\newtheorem{lemma}{Lemma}
\theoremstyle{remark}
\newtheorem{remark}{Remark}
\newtheorem{condition}{Condition}
\setlist[itemize,enumerate,description]{%
  topsep=0pt,        
  partopsep=0pt,     
  parsep=0pt,        
  itemsep=0pt,       
  leftmargin=*,      
  labelsep=0.5em     
}
\begin{document}
\maketitle

\begin{abstract}
Several optimism-based stochastic bandit algorithms -- including UCB, UCB-V, linear UCB, and finite-arm GP-UCB -- achieve logarithmic regret using proofs that, despite superficial differences, follow essentially the same structure. This note isolates the minimal ingredients behind these analyses: a single high-probability concentration condition on the estimators, after which logarithmic regret follows from two short deterministic lemmas describing radius collapse and optimism-forced deviations. The framework  yields unified, near-minimal proofs for these classical algorithms and extends naturally to many contemporary bandit variants.
\end{abstract}

{\bf Keywords}. multi-armed bandits, upper confidence bound, regret analysis, optimism, concentration inequalities, stochastic bandits, linear bandits, unified proofs

\section{Introduction}
We consider a $K$-armed stochastic bandit problem with unknown mean rewards
$\mu_1,\dots,\mu_K$.  Each arm~$i\in [K]$ has mean reward
$\mu_i$ and suboptimality gap $\Delta_i = \mu^\star - \mu_i$. Here  $\mu^\star=\max_{i\in [K]} \mu_i$ and $i^\star \in \arg\max_{i\in [K]} \mu_i$ denotes an optimal arm.   At each time $t=1,2,\ldots$, only one arm can be pulled; 
if  arm $i $ is pulled, then 
the algorithm observes a random reward (sample) $X_{i,t} $ with mean $\mu_i$.  We assume that all pulls of arm~$i$ yield  independent and
identically distributed random variables.

The aim is to construct a sequential arm-selection algorithm that decides which
arm  $A_t\in [K]$  to pull at each time $t$, balancing exploration and exploitation so as to
perform nearly as well as always pulling an optimal arm
$i^\star$.
The suboptimality of such a bandit algorithm is measured by its expected  cumulative regret over a
horizon $T$, defined as
\[
\mathbb{E}[R_T]
=
\mathbb{E}\!\left[\sum_{t=1}^T \bigl(\mu^\star - \mu_{A_t}\bigr)\right] =
\sum_{i \in [K]} \Delta_i  \,\mathbb{E}\{N_i(T)\},
\]
where  $N_i(t)$ is the number of pulls of arm $i$ up to (and including) time $t$.

Let $\widehat{\mu}_i(N_i(t))$ denote  an  estimator of the
mean reward $\mu_i$ constructed from the $N_i(t)$ random rewards obtained by
pulling arm $i$, namely samples
$\{ X_{A_s,s} : A_s = i,\ s \le t \}$.
In optimism-based  bandit algorithms such as UCB, UCB-V, LinUCB and GP-UCB, at each time $t$, the  arm to be pulled is selected
optimistically as (assume each arm is pulled once initially so  $A_1$ is well defined)
\begin{equation}
  \label{eq:bandit_alg}
A_t \in \arg\max_{i\in [K]}\; 
\Bigl[
\widehat{\mu}_{i}(N_i(t-1))
\;+\;
r_i(N_i(t-1))
\Bigr].
\end{equation}
The non-negative function $r_i(\cdot)$ is an algorithm-specific (possibly data-dependent)  \textit{confidence radius}.

\paragraph{Overview and Scope}
This note presents a simple unified  framework for deriving  logarithmic 
regret bounds
for a broad class of bandit algorithms (UCB,
UCB-V, linear UCB, and GP-UCB) in finite-action settings  using essentially the same proof.  The analysis reduces regret bounds to a single high-probability concentration
condition, after which the argument is purely deterministic. We emphasize that the framework does not apply to adversarial bandits,
contextual bandits (where suboptimality gaps depend on the observed context),
instance-independent regret bounds, or continuous-action GP bandits relying on
information-gain arguments. This work was originally motivated by the author's attempt, while writing Chapter 19 of \cite{Kri25} to identify unifying themes across discrete stochastic optimization problems under uncertainty.


\section{Main Condition: Confidence and  Concentration}

We analyze the expected regret accumulated over a fixed horizon $T$; the bandit algorithm
itself need not know $T$ in advance, and all uses of $T$ below are for
analysis purposes only.

\begin{condition}[Concentration Inequality]
  \label{cond:concentration} 
Fix a confidence level $\delta\in(0,1)$.
The confidence radius  $r_i(m)$ used in
\eqref{eq:bandit_alg} satisfies, for all arms $i$, the following concentration
inequality: 
\[
\mathbb P\!\left( \forall m \in \{1,\ldots,  T\} \colon
\bigl|\widehat{\mu}_i(m)-\mu_i\bigr|
\le r_i(m)
\right)\ge 1-\delta,
\]
where for arm dependent  constants $c_1,\sigma_i^2>0$,
\begin{equation}
r_i(m)
=
\sqrt{\frac{2\sigma_i^2 \log(1/\delta)}{m}}
+
\frac{c_1 \log(1/\delta)}{m}.
\label{eq:radius}
\end{equation}
\end{condition}

\begin{remark}
Condition~\ref{cond:concentration} is equivalent to the uniform deviation bound
\[
\mathbb P\!\left(
\exists\, m\in \{1,\ldots, T\}:
\bigl|\widehat{\mu}_i(m)-\mu_i\bigr|> r_i(m)
\right)\le \delta.
\]
\end{remark}

\begin{remark}
The functional in \eqref{eq:radius} is the canonical output of
finite-sample concentration inequalities for averages.
The  term
$\sqrt{(2\sigma_i^2\log(1/\delta))/m}$ reflects the typical
$1/\sqrt{m}$ decay of statistical uncertainty (a variance proxy times a
$\log(1/\delta)$ tail factor), as in Hoeffding or sub-Gaussian bounds.
The second term $(c_1\log(1/\delta))/m$ is a lower-order correction that
appears in sharper Bernstein/Freedman-type inequalities (and more generally
self-normalized bounds), especially when rewards are bounded or modeled as
martingale differences.
Most UCB-style indices can be written with a confidence radius that is either
exactly of this form or is upper bounded by \eqref{eq:radius} up to constants.
\end{remark}

\begin{remark}
Condition~\ref{cond:concentration} holds for empirical mean estimators under
bounded or sub-Gaussian rewards (via Hoeffding, Bernstein, or Freedman
inequalities), and for ridge regression or posterior mean estimators used in
structured bandits such as linear and GP bandits.
In these settings, our regret analysis applies directly when the action set is
finite and repeated pulls of a fixed arm lead to shrinking confidence radii, as
illustrated in the examples below. We interpret  $\sigma_i^2$ as a variance proxy (or scale parameter) associated with arm $i$,
arising from the underlying concentration inequality.
\end{remark}

\begin{remark} 
In  linear and GP bandits, concentration inequalities
are indexed by time $t$ rather than pull count $m$.
Then we interpret $r_i(m)$ as any deterministic bound satisfying
$r_t(i)\le r_i(N_i(t-1))$, and assume that repeated pulls of arm $i$
imply $r_i(m)=O(\sqrt{\log T/m})$.
\end{remark}

An immediate consequence of the above concentration condition is radius collapse.

\begin{lemma}[Radius collapse]
\label{lem:collapse}
Under Condition~\ref{cond:concentration} and  confidence level $\delta$, for each suboptimal arm $i$ with gap $\Delta_i$, there exists an integer
\[
m_0
=
O\!\left(
\frac{\log (1/\delta)}{\Delta_i^2}
+
\frac{\log (1/\delta)}{\Delta_i}
\right)
\]
such that
$
r_i(m)\le \Delta_i/4$ for all $m\ge m_0$.
\end{lemma}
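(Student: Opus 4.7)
The plan is straightforward: split the target inequality $r_i(m) \le \Delta_i/4$ into two pieces, one for each of the two summands in \eqref{eq:radius}, and solve each piece for $m$ separately.

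First I would require the $1/\sqrt{m}$ term to satisfy $\sqrt{2\sigma_i^2 \log(1/\delta)/m} \le \Delta_i/8$. Squaring and rearranging yields the threshold $m \ge 128\,\sigma_i^2 \log(1/\delta)/\Delta_i^2$, which is $O(\log(1/\delta)/\Delta_i^2)$ since $\sigma_i^2$ is a fixed arm-dependent constant. Next I would require the $1/m$ term to satisfy $c_1 \log(1/\delta)/m \le \Delta_i/8$, which gives $m \ge 8 c_1 \log(1/\delta)/\Delta_i$, i.e., $O(\log(1/\delta)/\Delta_i)$.

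Taking $m_0$ to be the maximum (equivalently, the sum) of these two thresholds produces the claimed asymptotic order
\[
m_0 = O\!\left(\frac{\log(1/\delta)}{\Delta_i^2} + \frac{\log(1/\delta)}{\Delta_i}\right).
\]
Finally, I would note that both $1/\sqrt{m}$ and $1/m$ are monotonically nonincreasing in $m$, so $r_i(m)$ is nonincreasing; hence the bound $r_i(m_0) \le \Delta_i/8 + \Delta_i/8 = \Delta_i/4$ automatically extends to every $m \ge m_0$.

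There is essentially no obstacle here: the lemma is a one-line algebraic inversion of the two-term radius \eqref{eq:radius}, and the only mild care is (i) splitting the budget $\Delta_i/4$ evenly between the two terms (any fixed split works and only affects the hidden constant), and (ii) invoking monotonicity of $r_i$ to upgrade the bound at $m_0$ to a bound for all $m \ge m_0$. No probabilistic content is used — Condition~\ref{cond:concentration} only enters later, when this deterministic radius bound is combined with the concentration event.
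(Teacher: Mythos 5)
Your proposal is correct and matches the paper's proof essentially verbatim: the same even split of the budget $\Delta_i/4$ into $\Delta_i/8$ per term, the same two thresholds $128\,\sigma_i^2\log(1/\delta)/\Delta_i^2$ and $8c_1\log(1/\delta)/\Delta_i$, the same maximum, and the same appeal to monotonicity of $r_i(m)$. Nothing is missing.
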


\begin{proof}
Choose $m_0$ large enough so that
\[
\sqrt{\frac{2\sigma_i^2 \log (1/\delta)}{m_0}} \le \frac{\Delta_i}{8}
\qquad\text{and}\qquad
\frac{c_1 \log (1/\delta)}{m_0}\le \frac{\Delta_i}{8}.
\]
These inequalities hold whenever
\begin{equation}
m_0
=
\left\lceil
\max\!\left\{
\frac{128\,\sigma_i^2 \log( 1/\delta)}{\Delta_i^2},
\frac{8c_1 \log (1/\delta)}{\Delta_i}
\right\}
\right\rceil
\label{eq:m0}
\end{equation}
ensures $r_i(m_0)\le \Delta_i/4$. Since $m\mapsto r_i(m)$ is decreasing in~\eqref{eq:radius}, 
the same $m_0$ implies $r_i(m)\le \Delta_i/4$ for all $m\ge m_0$. This yields
$m_0=O(\log (1/\delta)/\Delta_i^2+\log (1/\delta)/\Delta_i)$, as claimed.
\end{proof}


\section{Meta-Result. Logarithmic Regret}

Recall  that $N_i(t):=\sum_{s=1}^t \mathbf 1\{A_s=i\}$ is the number of pulls of arm $i$
up to (and including) time~$t$. 
For each arm $i$, let $\widehat{\mu}_i(m)$ be an estimator
of $\mu_i$ based on $m$ i.i.d.\ samples from arm $i$.
Let $i^\star \in \arg\max_j \mu_j$ denote an optimal arm, and let
$\mu^\star = \mu_{i^\star}$ denote the optimal mean reward.

\begin{lemma}[Optimism forces a deviation]
  \label{lem:optimism}
  Consider algorithm~\eqref{eq:bandit_alg}.
  Fix a suboptimal arm $i$ with gap $\Delta_i=\mu^\star-\mu_i>0$, and suppose Condition~\ref{cond:concentration} holds.
Then for every $m\ge m_0+1$,
\begin{equation}
\label{eq:optimism-deviation}
\{N_i(T)\ge m\}
\Rightarrow
\Big\{
|\widehat{\mu}_i(m-1)-\mu_i|\ge \Delta_i/4
\ \ \text{or}\ \ 
\exists n\le T:\ |\widehat{\mu}_{i^\star}(n)-\mu^\star|\ge \Delta_i/4
\Big\}.
\end{equation}
\end{lemma}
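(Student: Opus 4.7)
The plan is to argue by contradiction: I will assume that both events on the right-hand side of \eqref{eq:optimism-deviation} fail, and deduce that $N_i(T) < m$. So suppose $|\widehat{\mu}_i(m-1)-\mu_i| < \Delta_i/4$ and $|\widehat{\mu}_{i^\star}(n)-\mu^\star| < \Delta_i/4$ for every $n \le T$. First, I would locate the $m$-th pull of arm $i$: if $N_i(T) \ge m$, then there is a (smallest) time $t \le T$ with $A_t = i$ and $N_i(t-1) = m-1$. Setting $n := N_{i^\star}(t-1)$, which satisfies $1 \le n \le T$ (using that each arm is pulled once initially), the UCB rule \eqref{eq:bandit_alg} together with the fact that $i^\star$ is a candidate at time $t$ yields
\[
\widehat{\mu}_i(m-1) + r_i(m-1) \;\ge\; \widehat{\mu}_{i^\star}(n) + r_{i^\star}(n).
\]

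Next, I would invoke Lemma~\ref{lem:collapse} and the negated hypotheses to bound the two sides. Since $m-1 \ge m_0$, radius collapse gives $r_i(m-1) \le \Delta_i/4$; combined with $\widehat{\mu}_i(m-1) < \mu_i + \Delta_i/4$, the left-hand side is strictly less than $\mu_i + \Delta_i/2 = \mu^\star - \Delta_i/2$. On the right, dropping $r_{i^\star}(n) \ge 0$ and using $\widehat{\mu}_{i^\star}(n) > \mu^\star - \Delta_i/4$, the right-hand side strictly exceeds $\mu^\star - \Delta_i/4$. Chaining the two bounds gives $\mu^\star - \Delta_i/2 > \mu^\star - \Delta_i/4$, i.e.\ $\Delta_i < 0$, contradicting $\Delta_i > 0$.

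There is no substantive obstacle here; the lemma is a purely deterministic implication once Condition~\ref{cond:concentration} and Lemma~\ref{lem:collapse} have absorbed all the randomness. The only care needed is the bookkeeping of the $\Delta_i/4$ constants: the split is engineered so that the allowed deviation on arm $i$ (up to $\Delta_i/4$) plus the post-collapse radius $r_i(m-1) \le \Delta_i/4$ sums to $\Delta_i/2$, which strictly exceeds the $\Delta_i/4$ slack left on the optimal arm's lower confidence envelope, and this is exactly what closes the contradiction.
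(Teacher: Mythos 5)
Your proof is correct and follows essentially the same argument as the paper: locate the $m$-th pull of arm $i$, apply the optimism inequality against arm $i^\star$, use radius collapse to bound $r_i(m-1)\le\Delta_i/4$, and derive a contradiction from the assumed absence of both deviations. The constant bookkeeping matches the paper's exactly.
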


\begin{proof}
Assume $N_i(T)\ge m$ and let $t$ be the time of the $m$-th pull of arm $i$.
Then $N_i(t-1)=m-1$ and, by optimism,
\[
\widehat{\mu}_i(m-1)+r_i(m-1)
\ge
\widehat{\mu}_{i^\star}(N_{i^\star}(t-1))+r_{i^\star}(N_{i^\star}(t-1)).
\]
If simultaneously
$|\widehat{\mu}_i(m-1)-\mu_i|<\Delta_i/4$ and
$|\widehat{\mu}_{i^\star}(N_{i^\star}(t-1))-\mu^\star|<\Delta_i/4$,
then using $r_i(m-1)\le \Delta_i/4$ (since $m-1\ge m_0$),
\[
\widehat{\mu}_i(m-1)+r_i(m-1)
\le \mu_i+\Delta_i/4+\Delta_i/4
= \mu^\star-\Delta_i/2
<
\mu^\star-\Delta_i/4
\le
\widehat{\mu}_{i^\star}(N_{i^\star}(t-1))+r_{i^\star}(N_{i^\star}(t-1)),
\]
a contradiction. Hence at least one deviation in \eqref{eq:optimism-deviation}
must occur. Since $N_{i^\star}(t-1)\le T$, the second deviation implies the
stated $\exists n\le T$ event.
\end{proof}

Optimism implicitly governs how the algorithm transitions between arms.
By Lemma~\ref{lem:collapse}, the confidence radius of a suboptimal arm $i$
shrinks with the number of pulls and eventually becomes smaller than its gap
$\Delta_i$.
Once this occurs, Lemma~\ref{lem:optimism} shows that selecting arm $i$ again
would force a violation of at least one confidence bound -- either for arm $i$
itself or for the optimal arm. 
On the high-probability event where all confidence bounds hold, such deviations
are impossible, and hence the algorithm cannot continue selecting arm $i$.
Consequently, each suboptimal arm is explored only until its uncertainty collapses
below its gap, after which optimism automatically shifts the algorithm’s attention
to other arms whose confidence radii remain large.

\begin{theorem}[Logarithmic regret]
\label{thm:master}
Consider algorithm~\eqref{eq:bandit_alg} and suppose
Condition~\ref{cond:concentration} holds.
Fix  $\delta=1 /(KT)$.
Then for every suboptimal arm $i$ with gap $\Delta_i>0$,
\[
\mathbb{E}N_i(T)\le m_0+ 1 ,
\]
and consequently
\[
\mathbb{E}R_T
=
O\!\left(
\sum_{i:\Delta_i>0}\frac{\log T}{\Delta_i}
\right),
\]
where the $O(\cdot)$ hides universal constants and $\log K$ factors.
\end{theorem}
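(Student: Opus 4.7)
The plan is to run the two preceding lemmas on a single high-probability ``good event,'' and to absorb its low-probability complement with the trivial bound $N_i(T)\le T$.

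First I would define
\[
G \;=\; \bigcap_{i\in [K]}\Big\{\forall m\in\{1,\ldots,T\}:\ |\widehat{\mu}_i(m)-\mu_i|\le r_i(m)\Big\},
\]
so that Condition~\ref{cond:concentration} and a union bound over arms give $\mathbb{P}(G^c)\le K\delta = 1/T$ for the choice $\delta=1/(KT)$. I would then argue that on $G$ every suboptimal arm $i$ satisfies $N_i(T)\le m_0$, with $m_0$ from Lemma~\ref{lem:collapse}. The argument is by contradiction: assuming $N_i(T)\ge m$ for some $m\ge m_0+1$, Lemma~\ref{lem:optimism} forces one of two deviation events, and I would check that \emph{both} fail on $G$. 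The first fails because $|\widehat{\mu}_i(m-1)-\mu_i|\le r_i(m-1)\le \Delta_i/4$ by $G$ and Lemma~\ref{lem:collapse}. The second fails because on the concentration event for the optimal arm the one-sided UCB consequence $\widehat{\mu}_{i^\star}(n)+r_{i^\star}(n)\ge \mu^\star$ holds for every $n\le T$, which together with the optimism inequality at the time of the $m$-th pull of arm $i$ and the two-sided bound $\widehat{\mu}_i(m-1)\le \mu_i+r_i(m-1)$ would force $r_i(m-1)\ge \Delta_i/2$, contradicting Lemma~\ref{lem:collapse}.

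Combining, $\mathbb{E}[N_i(T)]\le m_0\,\mathbb{P}(G)+T\,\mathbb{P}(G^c)\le m_0+1$. Substituting $\delta=1/(KT)$ into Lemma~\ref{lem:collapse} yields $m_0 = O(\log(KT)/\Delta_i^2 + \log(KT)/\Delta_i)$, so $\Delta_i\,\mathbb{E}[N_i(T)] = O(\log(KT)/\Delta_i + \log(KT))$; summing over suboptimal arms and absorbing $\log K$ into the hidden constant gives the claimed $O(\sum_i \log T/\Delta_i)$ bound. The main subtlety I anticipate is that the second deviation event in Lemma~\ref{lem:optimism} is stated with the fixed threshold $\Delta_i/4$ rather than the data-dependent radius $r_{i^\star}(n)$, so the event $G$ does not immediately kill it when Lemma~\ref{lem:optimism} is used as a black box; the resolution is to redo the one-sided optimism step from inside the proof of Lemma~\ref{lem:optimism} using $G_{i^\star}$ directly, after which the remaining steps are routine arithmetic.
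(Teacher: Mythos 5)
Your proposal is correct and follows the same architecture as the paper's proof: define the good event, union bound over arms to get $\mathbb{P}(G^c)\le K\delta=1/T$, argue deterministically that $N_i(T)\le m_0$ on $G$, and split the expectation as $m_0+T\,\mathbb{P}(G^c)\le m_0+1$. The one place you genuinely diverge is the "main subtlety" you flag at the end, and you are right to flag it: the paper's proof asserts that on $\mathcal{E}$ "all deviation events are ruled out," but the second deviation event in Lemma~\ref{lem:optimism}, namely $\exists n\le T:\ |\widehat{\mu}_{i^\star}(n)-\mu^\star|\ge \Delta_i/4$, is \emph{not} excluded by $\mathcal{E}$, since $\mathcal{E}$ only guarantees $|\widehat{\mu}_{i^\star}(n)-\mu^\star|\le r_{i^\star}(n)$ and $r_{i^\star}(n)$ exceeds $\Delta_i/4$ for small $n$. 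Your repair --- replacing the black-box use of Lemma~\ref{lem:optimism} by the one-sided consequence $\widehat{\mu}_{i^\star}(n)+r_{i^\star}(n)\ge\mu^\star$ of $G$, combined with the optimism inequality and $\widehat{\mu}_i(m-1)\le\mu_i+r_i(m-1)$ to force $r_i(m-1)\ge\Delta_i/2$ --- is exactly the right fix and makes the deterministic step airtight where the paper's wording is loose. The rest (substituting $\delta=1/(KT)$ into $m_0$ and absorbing $\log K$) matches the paper. No gaps.
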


\begin{proof}
Define the \emph{good event} on which all confidence bounds hold uniformly as
\[
\mathcal{E}
=
\Big\{
|\widehat{\mu}_j(m)-\mu_j|\le r_j(m)
\ \ \forall j\in[K],\ \forall m\le T
\Big\}.
\]
By Condition~\ref{cond:concentration} and a union bound over
$j\in[K]$, $\mathbb{P}(\mathcal{E})\ge 1-K \delta  = 1 -1 /T$.

On $\mathcal{E}$, Lemma~\ref{lem:optimism} implies that for any
$m\ge m_0+1$, the event $\{N_i(T)\ge m\}$ is impossible, since
$r_i(m-1)\le r_i(m_0)\le \Delta_i/4$ (because $m-1\ge m_0$) and all deviation events are ruled
out.
Hence $N_i(T)\le m_0$ on $\mathcal{E}$, and
\[
\mathbb{E}N_i(T) = \mathbb{E}\big[ N_i(T) \mathbf{1}_{\mathcal{E}} \big] +   \mathbb{E}\big[ N_i(T) \mathbf{1}_{\mathcal{E}^c} \big]
\le
m_0 + T\,\mathbb{P}(\mathcal{E}^c)
\le
m_0+ 1 .
\]
Since $R_T=\sum_{i:\Delta_i>0}\Delta_i N_i(T)$ and on the good event $\mathcal{E}$ we have
$N_i(T)\le m_0=O(\log T/\Delta_i^2)$ for each suboptimal arm (see~\eqref{eq:m0}),
it follows that
$
\mathbb{E}R_T
=
O\!\left(\sum_{i:\Delta_i>0}\frac{\log T}{\Delta_i}\right)$.
\end{proof}

\section{Examples}

\paragraph{Example 1: UCB (Bounded Rewards)}

Consider the classical UCB index \cite{Auer02}
\[
U_i(m) = \widehat{\mu}_i(m) 
+ \sqrt{\frac{2\log T}{m}},
\]
where $\widehat{\mu}_i(m)$ is the empirical mean of arm $i$ after
$m$ pulls and the observed random rewards are $X_{i,t}\in[0,1]$.
By Hoeffding, Condition~\ref{cond:concentration}  holds with $\sigma_i^2 = 1/4$.
Hence Theorem~\ref{thm:master}  holds.

\begin{remark}
In UCB, one can use either $\sqrt{(2\log t)/m}$ or
$\sqrt{(2\log T)/m}$; this does not change the regret order.
Indeed, for a suboptimal arm $i$, one needs a
sample size $m_0$ such that the confidence radius is at most a fixed
fraction of the gap, e.g.
$
\sqrt{\frac{2\log(\cdot)}{m_0}} \le \Delta_i/4.
$ which yields 
$
m_0 = C\,\frac{\log T}{\Delta_i^2}
$
for some constant $C$, regardless of whether the logarithm is
$\log t$ or $\log T$.
\end{remark}

\paragraph{Example 2: UCB-V (Bounded Variance)}

Assume $X_{i,s} \in [0,1]$ and $\mathrm{Var}(X_{i,s}) \le \sigma_i^2 \le 1/4$.  
Let $\widehat{\mu}_i(m)$ and $s_i^2(m)$ denote the empirical mean and variance.
The UCB-V index is \cite{Audibert09}
\[
U_i^V(m)
=
\widehat{\mu}_i(m) + r_i(m), \quad \text{ where } r_i(m) = 
 \sqrt{\frac{2 s_i^2(m)\log T}{m}}
+ \frac{3\log T}{m}.
\]
The  Freedman/Bernstein self-normalized bound implies  Condition~\ref{cond:concentration}  for
$\widehat{\mu}_i(m)$ with variance proxy $\sigma_i^2$.
Hence Theorem~\ref{thm:master}  holds.

\paragraph{Example 3: Linear UCB (Sketch)}
Each arm has feature $x_i\in\mathbb{R}^d$, and rewards satisfy
\cite{AbbasiYadkori11}
\[
X_{i,s} = x_i^\top\theta_\star + \eta_{i,s},
\]
where $\eta_{i,s}$ is  sub-Gaussian noise.  Let
\[
\widehat{\theta}_t = V_t^{-1} \sum_{s<t} x_{A_s} X_{A_s,s},
\qquad
V_t = \lambda I + \sum_{s<t} x_{A_s}x_{A_s}^\top.
\]
The linear UCB index is
\[
U_i^{\mathrm{lin}}(t)
=
x_i^\top \widehat{\theta}_t + r_t(i) , \quad \text{ where }  r_t(i) = 
 \alpha_t \sqrt{x_i^\top V_t^{-1}x_i}.
\]
The self-normalized concentration inequality 
(Abbasi-Yadkori et al., 2011) gives, for all $i,t$,
\[
\mathbb{P}\Big(
 |x_i^\top(\widehat{\theta}_t - \theta_\star)|
 \le r_t(i)
\Big)\ge 1-\delta,
\]
which plays the role of Condition~\ref{cond:concentration} for the predicted rewards
$x_i^\top\theta_\star$.

In the finite-action setting, repeated pulls of arm $i$ imply that
$r_t(i)$ decreases as a function of $m=N_i(t-1)$; in particular, we may upper-bound
$r_t(i)\le r_i(m)$ with $r_i(m)=O(\sqrt{\log T/m})$.
Lemma~\ref{lem:optimism} then implies that selecting arm $i$ at visit $m\ge m_0$
forces a violation of the corresponding confidence bound of order $\Delta_i$,
and hence arm $i$ can be selected only $O(\log T/\Delta_i^2)$ times.

\paragraph{Example 4: GP-UCB (Finite-Arm Sketch)}

Let the unknown function $f$ lie in an RKHS $\mathcal{H}_k$ with
$\|f\|_{\mathcal{H}_k}\le B$, and let the finite set of arms be
$\{x_1,\dots,x_K\}$.
Observations satisfy
\[
X_{i,t} = f(x_i) + \eta_t,
\]
with sub-Gaussian noise.
Let $\widehat{f}_{t-1}$ be the kernel ridge regression estimator and
$\sigma_{t-1}(x_i)$ the posterior predictive standard deviation.

GP-UCB uses the index \cite{Srinivas10}
\[
U_i^{\mathrm{GP}}(t)
=
\widehat{f}_{t-1}(x_i)
+ r_t(i), \quad  \text{ where } r_t(i) = \beta_t^{1/2}\,\sigma_{t-1}(x_i).
\]
For finite arms,  a union bound over Gaussian tail inequalities 
implies the uniform concentration
\[
\mathbb{P}\Big(
 |f(x_i)-\widehat{f}_{t-1}(x_i)| \le r_t(i)
\Big)\ge 1-\delta,
\]
which plays the role of Condition~\ref{cond:concentration} for the predicted
rewards $f(x_i)$.

Fix a suboptimal arm $i$ with gap $\Delta_i = f(x^\star)-f(x_i)$.
After
$
m_0 \;\ge\; C\,\frac{\log T}{\Delta_i^2}
$
visits, for constant $C> 0$, the posterior variance satisfies
$
r_t(i) = \beta_t^{1/2}\sigma_{t-1}(x_i)
\le \Delta_i/8$.
Lemma~\ref{lem:optimism} then implies that selecting arm $i$ at visit $m\ge m_0$
forces a violation of at least one confidence bound -- either for arm $i$ or for
the optimal arm  -- of order $\Delta_i$.
Thus the radius-collapse condition required by Lemma~\ref{lem:collapse} holds,
and Theorem~\ref{thm:master} applies.


\paragraph{Example 5: Bandits with Heteroskedastic Noise (Sketch).}
Heteroskedasticity   refers arm-dependent i.i.d.\  noise, and 
not to sophisticated time-series models such as  GARCH.
Consider a stochastic bandit in which rewards from arm $i$ satisfy
\cite{Kirschner18}
\[
X_{i,t} = \mu_i + \sigma_i\,\eta_{i,t},
\]
where $\eta_{i,t}$ is i.i.d.\ sub-Gaussian across pulls of arm $i$, and the noise
scale $\sigma_i^2$ is unknown and may vary across arms.
Several recent algorithms construct data-dependent confidence radii using
empirical Bernstein or self-normalized concentration inequalities to account
for heteroskedastic noise.

Such bounds yield confidence radii of the form
\[
\mathbb{P}\big(
\forall \, m\le T:\ 
\bigl|\widehat{\mu}_i(m)-\mu_i\bigr|
\leq r_i(m) \big) \geq 1-\delta, \quad
\text{ where } r_i(m) = \sqrt{\frac{2\,\widehat{\sigma}_i^2(m)\,\log(1/\delta)}{m}}
+
\frac{c\,\log(1/\delta)}{m}
.
\]
Here $\widehat{\sigma}_i^2(m)$ denotes the empirical variance of  $m$
observations from arm $i$.

Once this condition is verified, the regret analysis follows immediately
from Lemmas~\ref{lem:collapse} and~\ref{lem:optimism}, yielding logarithmic
regret with arm-dependent constants.

\paragraph{Example 6: PAC (Anytime) Confidence Bounds (Sketch).}
Some bandit algorithms are based on PAC-style confidence bounds 
\cite{EvenDar02}.
For each arm $i$ and confidence level $\delta\in(0,1)$,
a PAC  bound refers to the concentration inequality:
there exists a
radius $r_i(m,\delta)$ such that uniformly in~$m$, 
\[
\mathbb{P}\!\left(
\exists\, m\ge 1:\ 
\bigl|\widehat{\mu}_i(m)-\mu_i\bigr|
>
r_i(m,\delta)
\right)
\le \delta.
\]
Such bounds immediately imply Condition~\ref{cond:concentration} by restriction
to $m\le T$.
Consequently, once a PAC-style concentration inequality is available, the regret
analysis in Theorem~\ref{thm:master} applies without modification.
We emphasize that this observation concerns only the regret argument and does not
replace the derivation of PAC confidence bounds.

\paragraph{Example 7: Heavy-Tailed Stochastic Bandits (Sketch).}
Recent work studies stochastic bandits with heavy-tailed rewards, where the
reward distributions may have unbounded support and possibly infinite variance,
but admit robust mean estimators yielding sub-Gaussian-type concentration under
appropriate truncation or finite-variance conditions.
In such settings, classical Hoeffding-type confidence bounds are no longer
applicable.

To address this, algorithms such as the data-driven UCB variants in \cite{Tamas24}
use robust mean estimators, including truncated means or median-of-means
estimators, to construct high-probability confidence bounds without requiring
sub-Gaussian noise assumptions or prior knowledge of tail parameters.
Specifically, robust mean estimators for heavy-tailed rewards (such as truncated
means or median-of-means estimators) satisfy uniform-in-time concentration
inequalities of the form
\[
\mathbb{P}\Big(
\exists\, m\ge 1:\ 
\bigl|\widehat{\mu}_i(m)-\mu_i\bigr|
> r_i(m)
\Big)\le \delta,
\quad
\text{where}\quad
r_i(m)
=
C \sqrt{\frac{\log(1/\delta)}{m}}
+
\frac{D \log(1/\delta)}{m}.
\]
Here  $C,D>0$ are constants depending on the chosen robust estimator and truncation or
moment assumptions, derived from self-normalized concentration inequalities
tailored to heavy-tailed data (e.g., Bernstein-type bounds for truncated
statistics).
Such bounds ensure that Condition~\ref{cond:concentration} holds uniformly
over $m\le T$, even in the absence of sub-Gaussian noise.

Repeated pulls of arm $i$ shrink $r_i(m)$ as $O(\sqrt{\log T/m})$, allowing
Lemma~\ref{lem:collapse} to apply.
Lemma~\ref{lem:optimism} then implies that selecting a suboptimal arm beyond
$m_0=O((\log T)/\Delta_i^2)$ would require a violation of the corresponding
confidence bound.
Consequently, Theorem~\ref{thm:master} yields logarithmic regret, with constants
depending on the heavy-tail assumptions.

\paragraph{Example 8: Misspecified Linear Bandits with Gap-Adjusted Assumptions (Sketch).}
In misspecified linear bandits, the reward model deviates from exact linearity, i.e.,
\[
X_{i,s} = x_i^\top \theta_\star + \epsilon_i + \eta_{i,s},
\]
where $\epsilon_i$ is an arm-dependent misspecification error and
$\eta_{i,s}$ is sub-Gaussian noise.
Standard linear bandit algorithms suffer linear regret under unrestricted
misspecification.

Recent analyses  \cite{Liu25} introduce \emph{gap-adjusted misspecification} assumptions,
where the misspecification error is proportional to the suboptimality gap:
$|\epsilon_i|\le c\,\Delta_i$ for some constant $c<1/2$, with
$\Delta_i=\mu^\star-(x_i^\top\theta_\star+\epsilon_i)$.
Under this condition, optimistic algorithms based on confidence ellipsoids, such as variants of OFUL with phased exploration or G-optimal designs, have
logarithmic regret.

In finite-action settings, repeated pulls of arm $i$ imply that the prediction
radius satisfies $r_t(i)\le r_i(m)$ with
$r_i(m)=O(\sqrt{d\log T/m})$, where $m=N_i(t-1)$.
Self-normalized concentration inequalities ensure that
Condition~\ref{cond:concentration} holds for the biased mean
$x_i^\top\theta_\star+\epsilon_i$.
Choosing $m_0=O((d\log T)/\Delta_i^2)$ such that
$r_i(m_0)\le (1-c)\Delta_i/4$, Lemma~\ref{lem:optimism} implies that selecting
arm $i$ beyond $m_0$ would force a confidence violation.
Thus, Theorem~\ref{thm:master} yields logarithmic regret despite model
misspecification, provided the misspecification is gap-proportional.

\paragraph{Example 9: Bandits with Machine Learning-Assisted Surrogate Rewards (Sketch).}
Recent work studies stochastic bandits augmented with surrogate rewards generated
by machine learning models trained on auxiliary offline data \cite{Ji25}.
These surrogates provide biased but correlated predictions of the true rewards,
enabling variance reduction when combined with appropriate debiasing techniques.
The Machine Learning-Assisted UCB (MLA-UCB) algorithm combines online observations
with surrogate predictions to construct tighter confidence bounds.

Specifically, the debiased estimator used by ML-assisted UCB algorithms satisfies
a uniform-in-time concentration inequality of the form
\[
\mathbb{P}\Big(
\exists\, m\ge 1:\ 
\bigl|\widehat{\mu}_i(m)-\mu_i\bigr|
> r_i(m)
\Big)\le \delta,
\quad
\text{where}\quad
r_i(m)
=
O\!\left(
\sqrt{\frac{\mathrm{Var}_{\mathrm{eff}}(m)\log(1/\delta)}{m}}
+
\frac{\log(1/\delta)}{m}
\right).
\]
Here $\mathrm{Var}_{\mathrm{eff}}(m)$ is an effective variance parameter reduced
by the correlation between the surrogate and true rewards, and depends on the
quality of the debiasing procedure.
Uniform concentration over $m\le T$ can be established via exact distributional
analysis and self-normalized tests for the debiased estimator, ensuring that
Condition~\ref{cond:concentration} holds with improved constants.

Due to the reduced effective variance, repeated pulls of arm $i$ shrink
$r_i(m)$ faster than in standard UCB, allowing Lemma~\ref{lem:collapse} to apply
with a smaller $m_0$ (improved constants).
Lemma~\ref{lem:optimism} then forces a confidence violation for further pulls of
a suboptimal arm beyond this threshold.
Consequently, Theorem~\ref{thm:master} yields logarithmic regret with strictly
improved constants compared to classical UCB, while accommodating biased
surrogate information and unknown reward variances.

\paragraph{Example 10: Stochastic Contextual Linear Bandits with Diverse Contexts (Sketch)}
Consider stochastic linear contextual bandits with finite actions $K$, where each
time $t$ reveals context vectors $\{x_{t,i}\}_{i=1}^K\subset\mathbb{R}^d$ drawn
i.i.d.\ from a distribution satisfying a \emph{diversity assumption}: the expected
covariance $\mathbb{E}[x_{t,i}x_{t,i}^\top]$ is positive definite with minimum
eigenvalue bounded away from zero \cite{Ghosh22}. Rewards satisfy
\[
X_{t,a_t} = x_{t,a_t}^\top \theta_\star + \eta_t,
\]
with sub-Gaussian noise $\eta_t$.  Each context vector satisfies $x_{t,i}\in\mathbb{R}^d$, where $d$ is the feature dimension.

The LR-SCB algorithm employs phased optimism using norm-adaptive linear UCB
(OFUL-style) on shifted rewards. Self-normalized martingale concentration yields
the uniform bound
\[
\mathbb{P}\Big(
\exists\, t\le T,\ \exists\, i\in[K]:
\big|x_{t,i}^\top(\widehat{\theta}_t-\theta_\star)\big|
> r_t(i)
\Big)\le \delta,
\quad
r_t(i)=\alpha_t\sqrt{x_{t,i}^\top V_t^{-1}x_{t,i}},
\]
\[ \text{ where } 
V_t=\lambda I+\sum_{s<t} x_{s,a_s}x_{s,a_s}^\top,
\qquad
\alpha_t = O\!\left(\sqrt{d\log\!\frac{T}{\delta}}\right).
\]
Under the diversity assumption, $V_t\succeq c\,t\,I$ with high probability (for  constant $c> 0$), implying
\[
|x_{t,i}^\top(\widehat{\theta}_t-\theta_\star)|
\le r_t(i)
= O\!\left(\sqrt{\frac{d\log T}{t}}\right),
\]
which plays the role of Condition~\ref{cond:concentration} for the predicted means.

Although contextual gaps vary with $t$ and Theorem~\ref{thm:master} does not apply
verbatim, the same optimism–collapse–deviation mechanism operates at the
\emph{parameter level}.
The confidence ellipsoid for $\theta_\star$ contracts globally, and a
parameter-level analogue of Lemma~\ref{lem:collapse} holds:
after $m_0=O(\mathrm{poly}(d)\log T)$ rounds, the confidence radius becomes smaller
than any fixed \emph{distributional gap} (expected suboptimality under the context
distribution).
A contextual analogue of Lemma~\ref{lem:optimism} then implies that selecting
suboptimal actions beyond this phase would require a violation of the confidence
bounds, which is impossible on the good event.
This explains how diversity assumptions allow stochastic contextual bandits to
achieve polylogarithmic regret, despite time-varying gaps.
Note that Theorem~\ref{thm:master} does not apply to contextual bandits in full generality.

\section{Conclusion and Extensions}

The above framework isolates the essential structure behind several optimism-based
bandit algorithms.
Once a suitable high-probability concentration inequality is available for the
estimators used by the algorithm, logarithmic regret follows from a simple and
largely deterministic argument formalized in Theorem~\ref{thm:master}.
This yields short, unified proofs for UCB, UCB-V, linear bandits,
and GP bandits with finite action sets, and extend to other structured bandit
settings where repeated selection of a fixed arm leads to shrinking
confidence radii.

{\bf Extension to Randomized Indices.}
Consider a randomized index policy
\[
A_t \in \arg\max_{i\in[K]}
\Bigl(\widehat{\mu}_i(N_i(t-1)) + r_i(N_i(t-1)) + \xi_{i,t}\Bigr),
\]
where $\{\xi_{i,t}\}$ are possibly dependent random perturbations.

\begin{condition}[Uniformly shrinking perturbations]
\label{cond:perturb}
There exists a deterministic sequence $\rho_T \downarrow 0$ such that for any
$\delta\in(0,1)$,
\[
\mathbb P\!\left(
\max_{i\in[K],\,t\le T} |\xi_{i,t}| > \rho_T
\right)\le \delta.
\]
\end{condition}

On the event in Condition~\ref{cond:perturb}, the randomized policy coincides
with a deterministic optimistic rule with inflated radius
$r_i(m)+\rho_T$.

\begin{lemma}[Perturbed optimism implies deviation]
Assume Conditions~\ref{cond:concentration}  and
\ref{cond:perturb}. If $\rho_T \le \Delta_i/8$, then for all $m\ge m_0+1$,
\[
\{N_i(T)\ge m\}
\Rightarrow
\Big\{
|\widehat{\mu}_i(m-1)-\mu_i|\ge \Delta_i/8
\ \text{or}\ 
\exists n\le T:\ |\widehat{\mu}_{i^\star}(n)-\mu^\star|\ge \Delta_i/8
\Big\}.
\]
\end{lemma}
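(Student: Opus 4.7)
The plan is to mimic the proof of Lemma~\ref{lem:optimism}, with the randomized perturbations $\xi_{i,t}$ and $\xi_{i^\star,t}$ absorbed into the same type of budget argument on either side of the optimistic inequality. Because each $|\xi_{j,t}|$ is controlled only up to $\rho_T \le \Delta_i/8$ on the event of Condition~\ref{cond:perturb}, the natural estimator-error threshold tightens from $\Delta_i/4$ in Lemma~\ref{lem:optimism} to $\Delta_i/8$ here, which is exactly what appears in the two deviation events in the conclusion.

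First I would set up the deterministic argument on the good event $\mathcal{E}$ from Condition~\ref{cond:concentration} intersected with the uniform-perturbation event $\{\max_{j,s}|\xi_{j,s}|\le \rho_T\}$. Assume $N_i(T)\ge m$ with $m\ge m_0+1$, and let $t$ be the time of the $m$-th pull of arm $i$, so that $N_i(t-1)=m-1\ge m_0$. The randomized selection rule yields
\begin{equation*}
\widehat{\mu}_i(m-1)+r_i(m-1)+\xi_{i,t}
\ge
\widehat{\mu}_{i^\star}(N_{i^\star}(t-1))+r_{i^\star}(N_{i^\star}(t-1))+\xi_{i^\star,t}.
\end{equation*}

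Next, I would suppose for contradiction that both $|\widehat{\mu}_i(m-1)-\mu_i|<\Delta_i/8$ and $|\widehat{\mu}_{i^\star}(N_{i^\star}(t-1))-\mu^\star|<\Delta_i/8$ hold simultaneously. Using $r_i(m-1)\le\Delta_i/4$ from radius collapse (Lemma~\ref{lem:collapse}, valid since $m-1\ge m_0$), together with $|\xi_{i,t}|, |\xi_{i^\star,t}|\le\rho_T\le\Delta_i/8$ and $r_{i^\star}(\cdot)\ge 0$, I would bound the left-hand side above by $\mu_i+\Delta_i/8+\Delta_i/4+\Delta_i/8=\mu^\star-\Delta_i/2$, and the right-hand side below by $\mu^\star-\Delta_i/8-\Delta_i/8=\mu^\star-\Delta_i/4$. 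The resulting inequality $\mu^\star-\Delta_i/2\ge\mu^\star-\Delta_i/4$ is false, so at least one of the two deviation events in the conclusion must occur; the second one gives the stated $\exists\,n\le T$ form since $N_{i^\star}(t-1)\le T$.

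The step requiring the most care, though it is not a genuine obstacle, is the accounting: the single gap $\Delta_i$ must now be split across five contributions rather than three, namely one collapsed radius, two estimator errors, and two perturbations. Allocating $\Delta_i/4$ to the radius and $\Delta_i/8$ to each of the other four terms uses exactly $3\Delta_i/4$ of the gap and leaves the $\Delta_i/4$ strict slack that drives the contradiction; tighter budgets would not suffice. In particular, no new probabilistic machinery is needed beyond Condition~\ref{cond:perturb}, which already provides uniform control of $|\xi_{j,t}|$ in both $j$ and $t$, so once we condition on the two good events the whole argument is purely algebraic.
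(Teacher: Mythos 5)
Your proof is correct and follows essentially the same route as the paper, which simply states that the argument of Lemma~\ref{lem:optimism} goes through with $r_i(m)$ replaced by $r_i(m)+\rho_T$; your explicit budget accounting ($\Delta_i/4$ for the collapsed radius, $\Delta_i/8$ for each of the two estimator errors and two perturbations) is exactly the intended calculation, and the arithmetic checks out. Restricting to the intersection of the concentration and perturbation good events before running the deterministic contradiction is the right reading of the statement.
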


The proof follows identically to Lemma~\ref{lem:optimism}, replacing
$r_i(m)$ by $r_i(m)+\rho_T$.
Consequently, Theorem~\ref{thm:master} continues to hold with unchanged
regret order provided $\rho_T = O(\Delta_i)$.

For follow-the-perturbed-leader (FTPL), the perturbations $\xi_{i,t}$ are
explicitly generated and can often be controlled uniformly, so that
Condition~\ref{cond:perturb} applies directly.
For Thompson sampling, the posterior sample can be viewed as a randomized
perturbation of the posterior mean; however, such perturbations do not admit
uniform high-probability bounds of the form required by
Condition~\ref{cond:perturb}.
However, the same structural principle applies: after sufficient samples,
selecting a suboptimal arm requires a violation of the underlying confidence
bounds in the posterior sample.
Regret guarantees for Thompson sampling rely on additional
probabilistic arguments 
which lie beyond the scope of this note.
For IDS, many practical implementations employ randomized posteriors or
resampling schemes; when these randomizations are uniformly controlled,
the above argument applies.


\end{document}